\newtheorem{definition}{Definition}[section]
\newtheorem{example}{Example}[section]
\newtheorem{proposition}{Proposition}[section]
\newenvironment{proof}{{\bf Proof:}}{\hfill $\square$\par}
\begin{document}
\title{\bf DeciLS-PBO: an Effective Local Search Method for Pseudo-Boolean Optimization}

\date{}

\author{\sffamily Luyu Jiang{$^{1,2}$}, Dantong Ouyang{$^{1,2}$}, Qi Zhang{$^{1,2}$}, Liming Zhang{$^{1,2,*}$}\\
{\sffamily\small $^1$ College of Computer Science and Technology, Jilin University, Changchun, 130012, China }\\
    {\sffamily\small $^2$ Key Laboratory of Symbolic Computation and Knowledge Engineering, Ministry of Education,}\\
    {\sffamily\small Jilin  University, Changchun, 130012, China }}
\renewcommand{\thefootnote}{\fnsymbol{footnote}}
\footnotetext[1]{Corresponding author. }

\maketitle

\begin{abstract}
Local search is an effective method for solving large-scale combinatorial optimization problems, and it has made remarkable progress in recent years through several subtle mechanisms. In this paper, we found two ways to improve the local search algorithms in solving Pseudo-Boolean Optimization(PBO): Firstly, some of those mechanisms such as unit propagation are merely used in solving MaxSAT before, which can be generalized to solve PBO as well; Secondly, the existing local search algorithms utilize the heuristic on variables, so-called score, to mainly guide the search. We attempt to gain more insights into the clause, as it plays the role of a middleman who builds a bridge between variables and the given formula. Hence, we first extended the combination of unit propagation-based decimation algorithm to PBO problem, giving a further generalized definition of unit clause for PBO problem, and apply it to the existing solver LS-PBO for constructing an initial assignment; then, we introduced a new heuristic on clauses, dubbed care, to set a higher priority for the clauses that are less satisfied in current iterations. Experiments on benchmarks from the most recent PB Competition, as well as three real-world application benchmarks including minimum-width confidence band, wireless sensor network optimization, and seating arrangement problems show that our algorithm DeciLS-PBO has a promising performance compared to the state-of-the-art algorithms.
\end{abstract}

\textbf{keywords}: pseudo-boolean optimization, local search, unit propagation

\section{Introduction}
Local search method is a rising star for solving combinatorial optimization problems in recent years, and the state-of-the-art local search-based incomplete MaxSAT solvers show promising performance even competitive to many complete solvers in recent MaxSAT Evaluations \cite{maxsat}. As a widely acknowledged efficient technique with a simple framework, local search works as an anytime solver and helps obtain a high-quality solution in any given time limit for solving large instances in many problem domains of combinatorial optimization \cite{ls1,ls2,ls3,ls4,ls5,ls6}. As the scale of the problem keeps increasing, complete methods based on the ideas including conflict-driven clause learning (CDCL) \cite{cdcl1,cdcl2,roundingsat} and Branch and Bound(BnB) \cite{ihs}, et al. suffer from serious time-consuming problems and are unable to obtain high-quality solutions in a short period of time. The local search algorithms, by contrast, show notable superiority in solving large-scale problems in a short time owing to their lightweight framework and subtle strategies. It is composed of an initial solution and an iterative process with a heuristic for finding better local solutions. Local search methods keep the iterative process until the termination condition (time limit in most cases) is reached, and then it returns the best solution obtained during the whole iterative process. Breakthroughs in local search-based solvers have taken place in the last decade since many effective new techniques have been proposed. Therefore, many incomplete solvers like DeciDist \cite{decidist}, CCEHC \cite{ccehc}, SATLike \cite{satlike}, etc., led to the success of modern local search-based MaxSAT solvers.

For problems such as SAT, MaxSAT, and Partial MaxSAT (PMS), formulas are encoded in conjunctive normal form (CNF). However, there exists a significant drawback which the express ability of CNF encoding is limited in handling problems with cardinality constraints. Indeed, there are many real-world situations such as scheduling, logic synthesis or verification, and discrete tomography, that involve cardinality constraints themselves. Based on the requirements above, many works seek to design an efficient encoding of cardinality constraints in CNF formulas\cite{hatt17,boud18,karp19}, and even a new encoding called Extended Conjunctive Normal Form (ECNF) \cite{ecnf} that can express cardinality constraints straightforward and need no auxiliary variables or clauses. Cardinality constraints are a special case of pseudo-boolean optimization (PBO). To further enlarge the scope of problems that can be expressed, nowadays studies start to focus on a more generalized problem, PBO. However, to the best of our knowledge, studies on PBO are very limited, furthermore, LS-PBO \cite{lspbo} is the first local search-based PBO solver in recent years.

In LS-PBO, Lei et al. \cite{lspbo} transformed the objective function of PBO into objective constraints, similar to soft clauses in weighted partial MaxSAT (WPMS), and treated PB constraints as hard constraints, similar to hard clauses in WPMS. Therefore, the modern techniques used in local search methods for solving (W)PMS can be applied to solving PBO as well. Therefore, after the transformation it allows one to use state-of-the-art local search-based PMS solvers right off-the-shelf as PBO solvers. Modern local search algorithms for solving PMS problems mainly focus on two components: a variable selection heuristic (i.e., a scoring function for variables) and a clause weighting scheme. They are elaborately designed to help the search avoid getting stuck at the local optima and many algorithms \cite{nudist,ccehc,satlike} have proposed different ways to help obtain better solutions. However, algorithms in recent years focus too much on variables, and the information about clauses, which can directly affect the satisfiability of the given formula, is ignored. As a connection between variables and the formula, the assignments to variables determine the true literals directly, and then the clauses containing true literals are determined as satisfied, finally, the formula is satisfied if all clauses are satisfied. Therefore, we realize it is important to gain more insights into clauses and propose a new heuristic on clause called ”care”. “Score” indicates the benefits of flipping a variable, whereas “care” reflects the total number of times a clause is falsified in the current stage.

Besides the variable selection heuristic and the clause weighting scheme, modern local search solvers also start to apply unit propagation(UP) for improving their initial process. To the best of our knowledge, DeciLS \cite{decidist} is the first contribution that combines UP-based decimation algorithm with local search for PMS. Recently, many studies have successfully applied UP-based decimation algorithms to generate better initial assignments than random ones. In \cite{satlike3}, Cai et al. applied the UP-based decimation algorithm to improve the initial assignments for local search algorithm in solving PMS problems. In 2021, Lei et al. \cite{ecnf} introduced the definition of generalized unit clause and further integrated UP to design a decimation algorithm for extended PMS with cardinality constraints, to produce initial assignments for LS-ECNF. Alongside the previous studies, we are motivated by integrating UP into the initial stage of local search method in solving PBO problems. Therefore, we proposed new definitions of improved generalized unit clauses to make UP applicable to PBO.

Based on the analyses above, this paper improved the local search PBO solver by presenting two new ideas. The contributions can be  summarized as follows:
\begin{itemize}
    \item First, we break the original mode that local search algorithm guided by a heuristic on variables called score, and propose a new heuristic on clauses, dubbed care, to guide the search straightforwardly to focus on the clauses (constraints) hardly to be satisfied in the current situation. We propose a scheme called Care-FC scheme based on the heuristic about care value, and apply it to the situation when the algorithm gets stuck at local optima.
    \item Second, we extend the use of unit propagation-based decimation algorithms to PBO problem for generating an improved assignment in the initial stage. In order to make it applicable, we also propose new definitions of improved generalized unit clauses and make some modifications to the original UP-based decimation. Thus, instead of generating an initial assignment randomly, we utilize unit propagation to improve the quality of the initial assignment.
    \item Finally, We carry out experiments to compare DeciLS-PBO against the state-of-the-art local search solver LS-PBO \cite{lspbo} and a newly published implicit hitting set-based solver PBO-IHS \cite{ihs} on benchmarks from three real-world application instances including minimum-width confidence band(MWCB), wireless sensor network optimization(WSNO), and seating arrangement problems(SAP), as well as the benchmarks from the most recent PB Competition. Experimental results show that our DeciLS-PBO outperforms the competing solvers on most of the instances mentioned above.
\end{itemize}

The remainder of this paper is organized as follows. Section \ref{sec:pre} introduces the preliminary knowledge. Section \ref{idea} presents the two new ideas including the heuristic on falsified clauses referred to as Care-FC scheme, and the improved UP-based decimation algorithm for PBO. Section \ref{pbo} presents our new algorithm DeciLS-PBO, which combined the two new ideas into the existing algorithm LS-PBO. Section \ref{sec:exp} provides the experimental results. And we draw the conclusion in Section \ref{sec:con}.

\section{Preliminaries}
\label{sec:pre}
Given a set of $n$ Boolean variables $\left\{x_{1}, x_{2}, \ldots, x_{n}\right\}$, a literal $l_{i}$ is either a variable $x_{i}$ or its negation $\neg x_{i}$.  A clause ci is a disjunction of literals. A conjunctive normal form (CNF) formula F is a conjunction of clauses. Each variable has its domain $\left\{0,1\right\}$, and an assignment $\alpha$ is a mapping from the variables to their values (0 or 1).  A clause is satisfied if it has at least one true literal, and falsified if all the literals in the clause are false literals. The goal of satisfiability problem is to judge whether a given CNF formula is satisfiable or not, and that of its improved version, MaxSAT problem, is to find a complete assignment that maximizes the number of satisfied clauses of the given CNF formula. Partial MaxSAT is an important variant of MaxSAT, and it divides the clauses into hard clauses and soft clauses. A hard clause is defined as a clause that must be satisfied in a feasible assignment, whereas a soft clause is defined to be a clause irrelevant to the feasibility of the solution but affects the quality of the solution. The goal of PMS is to find a complete assignment that satisfies all hard clauses and minimizes the number of falsified soft clauses. 

A pseudo-Boolean (PB) constraint is an inequality of the form $ \sum_{i} a_{i}l_{i} \geq B$, where $a_{i},B \in N^+ , l_{i}\in \left\{x_{i},\neg x_{i}\right\}$, and it can be equivalent to a 0-1 integer linear inequality by replacing literal $\neg x_{i}$ with $1-x_{i}$. Each literal $l_{i}$ is associated with a positive integer coefficient $a_{i}$, and B denotes the bound of the PB constraint. A PB constraint is satisfied if the 0-1 integer linear inequality holds. A PB constraint is simplified as a cardinality constraint when all coefficients are set to 1. The Pseudo-Boolean Optimization problem consists of two components over the Boolean variables $\left\{x_{1},x_{2},\ldots,x_{n}\right\}$: one is a set of PB constraints, the other is an objective function that minimizes $ \sum_{i}^{} b_{i}l_{i}, b_{i}\in N^+$. The goal of PBO is to minimize the value of the objective function under the premise that all PB constraints must be satisfied.

A PB formula is a conjunction of PB constraints, and it reduces to a formula expressed in extended conjunctive normal form (ECNF) when all coefficients of literals are set to 1. Moreover, for formulas expressed in conjunctive normal form (CNF), each literal in the form $l_{1}\wedge \ldots \wedge l_{j}$ can be expressed equally as a constraint $l_{1}+\ldots+l_{j}\geq1$. Therefore, formulas in both CNF and ECNF formats are special cases of PB formulas. In CNF encoding, a clause is satisfied if it contains at least one true literal. In ECNF encoding, a clause with cardinality constraints is satisfied if the number of true literals it contains reaches the given cardinality. For a pseudo-Boolean optimization problem, a PB constraint is satisfied if the sum of the coefficient of true literals reaches the bound.

\section{Main ideas}
\label{idea}

In this section, we present two new ideas in our algorithm.
\subsection{Heuristic with emphasis on falsified clauses}

\renewcommand{\floatpagefraction}{.9}
\begin{algorithm}[htbp] 
    \renewcommand{\algorithmicrequire}{\textbf{Input:}}
	\renewcommand{\algorithmicensure}{\textbf{Output:}}
	\caption{Modern local search algorithm framework for PMS/EPMS/PBO}
	\label{alg:1}
	\begin{algorithmic}[1]
            \Require 
		  PMS/EPMS/PBO instance $F$, cutoff time $cutoff$
            \Ensure 
             A feasible assignment $\alpha$ and its cost (objective value) for PMS and EPMS (PBO)
		\State $\alpha \gets $ an initial complete assignment; $\alpha* \gets \emptyset$;
                \While{elapsed time \textless $cutoff$}
			\If{$\alpha$ is feasible} \& $cost(\alpha)\textless cost$* 
                    		\State $\alpha* \gets \alpha; cost* \gets cost(\alpha)$;
			\EndIf
                    \If {${\exists}$ variable $x$ such that $score(x)\textgreater 0$}
                       \State $v\gets$variable with the highest $score(v)$;
                    \Else
                       \State update clause weight;
                       \If{${\exists}$falsified hard clauses } 
				\State $c \gets$ a random falsified hard clause;
			  \Else
				\State $c \gets$ a random falsified soft clause;
		         \EndIf
			  \State $v\gets$variable with highest $score(v)$ in $c$;
                    \EndIf
			\State $\alpha \gets \alpha$ with $v$ flipped;
                \EndWhile
	\State \Return{$(\alpha*,cost*)$ if $\alpha*$ is feasible, otherwise "no feasible solution"}
	\end{algorithmic} 
\end{algorithm}

A general framework of modern local search algorithms is presented in Algorithm \ref{alg:1}. It keeps on the loop and chooses a variable to flip in each turn. The algorithm is mainly guided by the heuristic on variables, i.e., the measurement called score. Score(v) is an indicator used to measure whether flipping the assignment of variable v can bring positive benefits, i.e., making more falsified clauses to be satisfied. score(v)\textgreater0 means that flipping the assignment of variable v can bring positive benefits, and the value of score(v) indicates how many positive benefits are gained by flipping the assignment of variable v.

The marrow of these algorithms is two components:  \textsl{a variable selection heuristic} and  \textsl{a clause weighting scheme}. The variable selection heuristic helps choose the suitable variable to flip in the current state, whereas the clause weighting scheme takes effect when the algorithm gets “stuck”, which means there is no variable with positive scores exists thus the variable selection heuristic cannot continue guiding the search. More detailly, the clause weighting scheme works by updating the weight of clauses, resulting in the update of variables’ scores. Therefore, both the variable selection heuristic and the clause weighting scheme emphasize the score of variables. Nevertheless, the studies about heuristics on clauses is ignored in recent years.
\begin{algorithm}[h] 
    \renewcommand{\algorithmicrequire}{\textbf{Input:}}
	\renewcommand{\algorithmicensure}{\textbf{Output:}}
	\caption{Modern local search algorithm framework with Care-FC scheme}
   \label{alg:2}
	\begin{algorithmic}[1]
            \Require PMS/EPMS/PBO instance $F$, cutoff time $cutoff$
            \Ensure A feasible assignment $\alpha$ and its cost (objective) for PMS and EPMS (PBO)
		\State $\alpha \gets $ an initial complete assignment; $\alpha* \gets \emptyset$;
                \While{elapsed time \textless $cutoff$}
			\If{$\alpha$ is feasible} \& $cost(\alpha)\textless cost$* 
                    		\State $\alpha* \gets \alpha; cost* \gets cost(\alpha)$;
			\EndIf
                    \If {${\exists}$ variable $x$ such that $score(x)\textgreater 0$}
                       \State $v\gets$variable $v$ with highest $score(v)$;
                    \Else
                       \State update clause weights;
                       \If{${\exists}$falsified hard clauses } 
				\State with probability $p$: $c \gets$ a random falsified hard clause;
				\State with probability $1-p$: $c \gets$ a hard clause with highest $care(c)$;
			  \Else
				\State $c \gets$ a random falsified soft clause;
		         \EndIf
			  \State $v\gets$variable with highest $score(v)$ in $c$;
                    \EndIf
			\State $\alpha \gets \alpha$ with $v$ flipped;
                \EndWhile
	\State \Return{$(\alpha*,cost*)$ if $\alpha*$ is feasible, otherwise "no feasible solution"}
	\end{algorithmic} 
\end{algorithm}

\begin{definition}
 (care): The care of a PB constraint $c$, denoted by $care(c)$, is the total falsified count of constraint $c$.
\end{definition}

A given formula consists of all the clauses, whereas a clause consists of several variables or their negations. Clauses play the role of a middleman who builds a bridge between literals and the given formula. Therefore, gaining more insights into clauses also matters. In this paper, we propose a new concept, called care, to measure the total falsified number of a clause. We lean to set a higher priority for the clauses with higher care values and then choose a suitable variable from clauses with the highest care value to flip when the algorithm gets stuck. Algorithm \ref{alg:2} shows the general framework with our new heuristic, i.e., Care-FC scheme. We adopt a probability $p$ to switch between randomly selecting a falsified clause and selecting the clause with the highest care value. Instead of choosing a falsified hard clause randomly, we added the possibility of choosing the hard clause with the highest care value, which indicates the chosen clause has the least satisfied count during the search, thus it needs more "care" than other clauses. Compared to the original mode of LS-PBO, the Care-FC scheme assigns higher precedence to the hard clauses that are hard to be satisfied. For a hard clause with the highest care value, the current successive complete assignments are unfavorable to satisfy it. In order to find a feasible solution at the earliest opportunity, we prefer to flip a variable from the falsified hard clause with the highest care value directly rather than flip from a random falsified hard clause. Therefore, Care-FC scheme provides a probability for the algorithm to obtain a feasible solution as soon as possible, thus more time is saved for the algorithm to improve the solution.

\subsection{Combining a unit propagation-based decimation algorithm}
Unit clause is a special type of clause with a single literal in it. Unit propagation has been used in solving SAT and MaxSAT. It works by collecting all unit clauses, and then assigning their literals to be true to satisfy all the unit clauses. The assignment obtained by unit propagation guarantees at least the unit clauses can be satisfied, and another advantage is that the given formula is simplified as well. Compared to a random complete assignment, it provides an intuitive way to improve the quality of the initial solution. LS-ECNF\cite{ecnf} considers the cardinality constraints and extends its definition to generalized unit clause. In this paper, we further generalize this definition to pseudo-Boolean constraints and introduce two kinds of improved generalized unit clauses.
\begin{definition}
 (1-of-all generalized unit clause): for a PB constraint $\sum_{i} a_{i}l_{i} \geq B$, the literal $l_{j}$ with the highest coefficient $a_{j}$ is a 1-of-all generalized unit clause, if $(\sum_{i} a_{i})-a_j\le B$.
\end{definition}
\begin{definition}
(generalized unit constraint): a PB constraint $\sum_{i} a_{i}l_{i} \geq B$ is a generalized unit constraint if and only if $\sum_{i} a_{i}\leq B$.
\end{definition}
\begin{definition}
(all-of-all generalized unit clause): every literal $l_{i}$ in a generalized unit constraint is an all-of-all generalized unit clause.
\end{definition}
\begin{example}
Consider a hard PBO constraint $5x_{1}+x_{2}+x_{3}+x_{4}\geq6$. To make it satisfiable, we must assign $x_{1}=1$ since there are two reasons: 1) $x_{1}$ has the highest coefficient, thus it has the greatest influence on the satisfiability of the constraint. 2) even if all the other literals are made true except $x_{1}$, the constraint will not be satisfied because 3\textless6, thus the assignments on these variables take no effect on the satisfiability of the constraint. Hence, $x_{1}=1$ is necessary for this constraint to be satisfied. And in this case, $x_{1}$ is a 1-of-all generalized unit clause; Consider another hard PBO constraint $2x_{1}+x_{2}+x_{3}+x_{4}\geq5$, we must make true every literal, i.e., $(x_{1}=1, x_{2}=1, x_{3}=1, x_{4}=1)$, since only in that case can the constraint be satisfied. Besides, the relationship between 1-of-all generalized unit clauses and all-of-all generalized unit clauses is described in Proposition \ref{pro}. Conversely, given a 1-of-all generalized unit clause, it will be an all-of-all generalized unit clause only when its variable exists in a generalized unit constraint.
\end{example}

\begin{proposition}
\label{pro}
Given a generalized unit PB constraint c, the literal with the highest coefficient in it must be a 1-of-all generalized unit clause.
\end{proposition}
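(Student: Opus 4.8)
The plan is to unfold the two definitions at play and chain a single elementary inequality. Write the constraint as $c:\ \sum_i a_i l_i \geq B$ with all $a_i, B \in N^+$. Since $c$ is a generalized unit PB constraint, the definition of a generalized unit constraint supplies the hypothesis $\sum_i a_i \leq B$. Let $l_j$ be a literal of $c$ attaining the highest coefficient $a_j$; such a literal is well defined because $c$ contains finitely many literals and at least one (ties, if any, may be broken arbitrarily — any maximizer works). By the definition of a 1-of-all generalized unit clause, to conclude that $l_j$ is a 1-of-all generalized unit clause it suffices to verify the single inequality $(\sum_i a_i) - a_j \leq B$.

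First I would observe that $a_j \in N^+$, so $a_j \geq 1 > 0$, whence deleting the term $a_j$ can only decrease the total: $(\sum_i a_i) - a_j \leq \sum_i a_i$. Combining this with the hypothesis gives $(\sum_i a_i) - a_j \leq \sum_i a_i \leq B$, which is exactly the condition the definition demands. Hence the highest-coefficient literal of $c$ is a 1-of-all generalized unit clause, as claimed.

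There is no genuine obstacle here: the proposition is an immediate consequence of the monotonicity of partial sums under removal of a positive term, together with the defining inequality $\sum_i a_i \leq B$ of a generalized unit constraint. The only points meriting a line of care are that the maximum-coefficient literal is well defined, and the degenerate single-literal case, in which $(\sum_i a_i) - a_j = 0 \leq B$ holds trivially because $B \in N^+$ (indeed $B \geq 1$). Both are routine and can each be dispatched in a sentence, so the whole argument is essentially a two-line inequality chain.
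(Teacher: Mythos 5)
Your argument is correct and matches the paper's own proof: both unfold the definition of a generalized unit constraint to get $\sum_i a_i \leq B$ and then conclude $(\sum_i a_i) - a_j \leq B$ from the positivity of $a_j$, which is precisely the defining condition of a 1-of-all generalized unit clause. The extra remarks about well-definedness of the maximizer and the single-literal case are fine but not needed.
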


\begin{proof}
Assume that the given PB constraint $c$ forms like $\sum_{i} a_{i}l_{i} \geq B$, and the highest coefficient in $c$ is $a_{j}$. Since $c$ is a generalized unit constraint, there must be $\sum_{i} a_{i}\leq B$. Hence, there must be $(\sum_{i} a_{i})-a_j\le B$ as each $a_{i}$ is a positive integer. As the definition of 1-of-all generalized unit clause describes, the unit clause relates literal $l_{j}$ must be a 1-of-all generalized unit clause. 
\end{proof}

\begin{algorithm*}[] 
    \renewcommand{\algorithmicrequire}{\textbf{Input:}}
	\renewcommand{\algorithmicensure}{\textbf{Output:}}
	\caption{IGUP-Decimation}
	\label{alg:3}
	\begin{algorithmic}[1]
            \Require PBO instance $F$
            \Ensure An assignment of variables in $F$
                \While{${\exists}$ unassigned variables}
			\If{${\exists}$ improved generalized hard unit clauses }
                    		\State pick an improved generalized hard unit clause $c$ randomly;
 				\State perform improved generalized unit propagation by $c$; 
                    \ElsIf {${\exists}$ improved generalized} soft unit clauses 
                       \State pick an improved generalized soft unit clause $c$ randomly;
			  \State perform improved generalized unit propagation by $c$; 
                    \Else
                       \State $x \gets $pick an unassigned variable randomly;
                       \State assign $x$ with a random value (0 or 1), simplify $F$ accordingly;
                    \EndIf
                \EndWhile
	\State \Return{the assignment to variables of $F$}
	\end{algorithmic} 
\end{algorithm*}

Improved generalized unit propagation: For a given PBO instance, we collect all its 1-of-all generalized unit clauses and all-of-all generalized unit clauses, and set the literals in these improved generalized unit clauses to be true, therefore, the PBO instance is simplified accordingly.
The IGUP-based decimation algorithm is shown in Algorithm \ref{alg:3}. The algorithm works by iteratively assigning variables one by one until all variables have already been assigned. In each step, there are three alternative cases. (1) Firstly, if there exist improved generalized hard unit clauses, the algorithm picks one randomly. If the chosen clause has a contradictory hard unit clause, the variable x that relates the two contradictory clauses is assigned randomly; otherwise, it continues to process unit propagation. (2) Secondly, if there only exist improved generalized soft unit clauses, the algorithm picks one randomly and works a similar way as the case above. (3) Lastly, if no unit clause exists, the algorithm picks an unassigned variable, assigns it randomly and the formula is simplified accordingly.

\section{A new local search algorithm for PBO}
\label{pbo}
In this section, we give a detailed description of the new algorithm DeciLS-PBO.  Based on the work of LS-PBO, we added an UP-based decimation algorithm for generating initial assignments and a new heuristic emphasizing clauses called care as well. And both new ideas are introduced in the previous section. 

The outline of DeciLS-PBO is described in Algorithm \ref{alg:4}. In the initial stage, the algorithm calls the UP-based decimation to generate an initial assignment (line 1), then the algorithm goes into the main loop and terminates when the running time reaches the time limit (lines 2$\sim$15). $\alpha*$ denotes the best feasible assignment and $cost*$ denotes the cost value of $\alpha*$.

In the main loop, the algorithm first checks if the assignment in the current iteration is the best feasible assignment, if so, $\alpha*$ and $cost*$ are updated respectively (lines 3$\sim$4). Then, the algorithm checks if there exist variables with positive scores, if so, the algorithm picks the variable with the highest score to flip (lines 5$\sim$6, and line 15); Otherwise, the algorithm gets stuck at a local optimum and applies the clause weighting scheme proposed in LS-PBO to update the weights of constraints (line 8). Then, in the stage of random perturbation (lines 9$\sim$14), if there exist falsified hard clauses, the algorithm chooses a random falsified hard clause with probability $p$ and chooses the falsified hard clause with the highest care value with probability $1-p$, instead of providing only one way as LS-PBO which is choosing a random falsified hard clause. Then, the algorithm picks the variable from the chosen clause with the highest score to flip (lines 14$\sim$15). The main loop keeps executing until the time limit is reached.

\begin{algorithm*}[] 
    \renewcommand{\algorithmicrequire}{\textbf{Input:}}
	\renewcommand{\algorithmicensure}{\textbf{Output:}}
	\caption{DeciLS-PBO}
	\label{alg:4}
	\begin{algorithmic}[1]
            \Require A PBO instance $F$, cutoff time $cutoff$
            \Ensure A feasible solution $\alpha$ and its objective value
		\State $\alpha \gets $ IGUP-Decimation; $\alpha* \gets \emptyset$;
                \While{elapsed time \textless $cutoff$}
			\If{${\not\exists}$falsified hard clauses \& $cost(\alpha)\textless cost$* }
                    		\State $\alpha* \gets \alpha; cost* \gets cost(\alpha)$;
			\EndIf
                    \If {${\exists}$ variable $x$ such that $score(x)\textgreater 0$}
                       \State $v \gets$ variable $v$ with the highest $score(v)$;
                    \Else
                       \State update weights of clauses by Weighting-PBO scheme;
                       \If{${\exists}$falsified hard clauses } 
				\State with probability $p$: $c \gets$ a random falsified hard clause;
				\State with probability $1-p$: $c \gets$ a hard clause with highest $care(c)$;
			  \Else
				\State $c \gets$ a random falsified soft clause;
		         \EndIf
			  \State $v \gets$ the variable with the highest $score(v)$ in $c$;
                    \EndIf
			\State $\alpha \gets \alpha$ with $v$ flipped;
                \EndWhile
	\State \Return{$(\alpha*,cost*)$ if $\alpha*$ is feasible, otherwise "no feasible solution"}
	\end{algorithmic} 
\end{algorithm*}

Finally, the algorithm returns the best found assignment alpha and its cost if alpha is feasible, otherwise, the algorithm fails to find a feasible solution and returns “no feasible solution”.

\section{Experimental results}
\label{sec:exp}

We evaluate DeciLS-PBO on PBO instances from three real-world application problems, including minimum width confidence band (MWCB), wireless sensor network optimization (WSNO), and seating arrangement problems (SAP). A detailed description of these three problems can be found in literature \cite{lspbo}. The benchmarks above and the source code of LS-PBO were kindly provided by Prof. Cai at \url{https://lcs.ios.ac.cn/~caisw/Resource/LS-PBO/}. In addition, we also evaluated our solver on the OPT-SMALL-INT benchmark from the most recent Pseudo-Boolean Competition 2016 \cite{pb16}. Table \ref{tab:infor} shows the range of variables and constraints for each type of instance. Literature \cite{lspbo} confirmed that LS-PBO outperforms the state-of-the-art solvers LS-ECNF \cite{ecnf}, Loandra \cite{loandra}, RoundingSAT \cite{roundingsat}, Open-WBO \cite{openwbo} and SATLike-c \cite{satlike} on the benchmarks above. Hence, in this paper, we compare our DeciLS-PBO with LS-PBO and newly proposed state-of-the-art PBO solver PBO-IHS \cite{ihs}.

\begin{table}[H]
\caption{The range of the number of variables and constraints for each kind of instance}
\label{tab:infor}
\begin{tabular}{llll}
\hline
Instances  & \#inst &  \#variable          & \#constraint         \\ \hline
MWCB      & 24 & 974000$\sim$11381750 & 591827$\sim$10522608 \\
WSNO      & 18 & 58580$\sim$2311113   & 214679$\sim$26701722 \\
SAP       & 21 & 1650$\sim$11550      & 7756$\sim$108474     \\
PB16      & 1600 & 2$\sim$400896        & 4$\sim$521620       \\ \hline
\end{tabular}
\end{table}

We implemented our solver DeciLS-PBO based on the code of LS-PBO in C++ and complied it using g++ with -O3 option. The experiments were conducted on a server using 2.60GHz Intel(R) Xeon(R) E5-2690 CPU, 60GB RAM, running the Ubuntu 18.04 Linux operating system. We set two different time limits: 300s and 3600s. And for the solvers using stochastic method, we conducted 30 runs per instance with various seeds and reported the minimum, median, and maximum values over these 30 runs. For parameters in DeciLS-PBO, we keep the settings for those inherited from LS-PBO, and for the only parameter $p$ we introduced in Care-FC scheme, the parameter domain of $p$ is [0, 1], and we set the default value of $p$ to 0.5.

\subsection{Experimental results on MWCB}

Experimental results on MWCB instances are shown in Table \ref{tab:mwcb300} and Table \ref{tab:mwcb3600}. For each instance, the best results among the three algorithms are marked in bold font. PBO-IHS performs the worst on both 300s and 3600s time limit. Hence, we focus on the comparison between solver LS-PBO and our solver DeciLS-PBO. 

Since 300s time limit experiment shows the performance of an algorithm in the early period, the schemes used in the initial stage have a greater impact on the performance. In the initial stage, LS-PBO generates an initial assignment by assigning all variables to 0, whereas our DeciLS-PBO calls the UP-based decimation to generate an initial assignment. For 300s time limit, Table \ref{tab:mwcb300} shows the best objective value obtained by DeciLS-PBO dominates that of LS-PBO on most of the instances, which indicates our unit propagation-based initial assignment is effective. 

For 3600s time limit, Table \ref{tab:mwcb3600} shows that both LS-PBO and DeciLS-PBO achieve better results than those of 300s time limit, and the best objective value obtained by DeciLS-PBO dominates that of LS-PBO on all instances except 1800\_200\_90 and 1800\_250\_90 instances. Therefore, Table \ref{tab:mwcb300} and Table \ref{tab:mwcb3600} demonstrate that our DeciLS-PBO has remarkable performance in the early stage and later as well.

\begin{table}[]
\caption{Results on MWCB under 300s time limit}
\label{tab:mwcb300}

\begin{tabular}{l|lll}
\hline
Instances     & Deci-LSPBO                       & LS-PBO                     & PBO-IHS \\
\hline
n m k         & Min{[}median,max{]}              & min{[}median,max{]}        &         \\ 
\hline
1000\_200\_90 & \textbf{109977}{[}+1172,+2425{]} & 110284{[}+1893,+3339{]}    & 190020  \\
1000\_250\_90 & \textbf{147594}{[}+1989.5,+3385{]}        & 148867{[}+894.5,+2729{]}   & 239610  \\
1200\_200\_90 & \textbf{111719}{[}+1804,+3109{]}          & 111821{[}+1823.5,+3020{]}  & 234000  \\
1200\_250\_90 & \textbf{150846}{[}+2572,+3543{]}          & 151742{[}+1777.5,+3645{]}  & 305445  \\
1400\_200\_90 & \textbf{110826}{[}+2096,+3867{]}          & 111916{[}+1453,+3432{]}    & 234000  \\
1400\_250\_90 & \textbf{150754}{[}+2747,+4344{]}          & 152023{[}+1593,+4562{]}    & 305445  \\
1600\_200\_90 & \textbf{138286}{[}+6476.5,+31316{]}       & 138443{[}+6465,+23994{]}   & 367252  \\
1600\_250\_90 & \textbf{186158}{[}+12162,+20707{]}        & 188021{[}+10142,+27216{]}  & 463177  \\
1800\_200\_90 & \textbf{222512}{[}+3393.5,+9378{]}        & 224517{[}+4030.5,+10987{]} & 386152  \\
1800\_250\_90 & \textbf{280395}{[}+4069,+10981{]}         & 283480{[}+5948,+12836{]}   & 482092  \\
2000\_200\_90 & \textbf{246473}{[}+5329,+11841{]}         & 248436{[}+5718,+11673{]}   & 400822  \\
2000\_250\_90 & \textbf{312680}{[}+5366,+11589{]}         & 318357{[}+3778.5,+10009{]} & 496891  \\
1000\_200\_95 & \textbf{117003}{[}+1114,+2559{]}          & 117427{[}+772.5,+2354{]}   & 190020  \\
1000\_250\_95 & \textbf{156678}{[}+2009,+2835{]}          & 157299{[}+1177.5,+3445{]}  & 239160  \\
1200\_200\_95 & \textbf{118414}{[}+1281,+2392{]}          & 118917{[}+607.5,+2204{]}   & 234000  \\
1200\_250\_95 & 159985{[}+1616.5,+3062{]}                 & \textbf{159966}{[}+1463,+2528{]}    & 305445  \\
1400\_200\_95 & \textbf{118617}{[}+1316,+2247{]}          & 118783{[}+953,+1889{]}     & 234000  \\
1400\_250\_95 & \textbf{161486}{[}+1378,+2599{]}          & 161752{[}+1275.5,+2534{]}  & 305445  \\
1600\_200\_95 & \textbf{184641}{[}+7601,+18174{]}         & 186543{[}+5680,+13354{]}   & 367252  \\
1600\_250\_95 & \textbf{237873}{[}+9365,+16938{]}         & 238974{[}+6825,+21612{]}   & 463177  \\
1800\_200\_95 & \textbf{251765}{[}+4326,+8574{]}          & 254319{[}+2541,+6949{]}    & 386152  \\
1800\_250\_95 & \textbf{315004}{[}+4428,+9553{]}          & 317284{[}+3512,+8087{]}    & 482092  \\
2000\_200\_95 & \textbf{273807}{[}+6588,+9950{]}          & 275854{[}+4289.5,+8439{]}  & 400822  \\
2000\_250\_95 & 343782{[}+5361,+9862{]}                   & \textbf{342949}{[}+5658.5,+10928{]} & 496891  \\ 
\hline
\end{tabular}

\end{table}

\begin{table}[]
\caption{Results on MWCB under 3600s time limit}
\label{tab:mwcb3600}

\begin{tabular}{l|lll}
\hline
Instances     & Deci-LSPBO                       & LS-PBO                      & PBO-IHS       \\ \hline
n m k         & min{[}median,max{]}              & min{[}median,max{]}         &        \\ \hline
1000\_200\_90 & \textbf{109795}{[}+1448,+2595{]} & 110316{[}+1009,+2757{]}     & 190020 \\
1000\_250\_90 & \textbf{147075}{[}+2687,+3922{]}          & 148350{[}+1690,+2795{]}     & 239610 \\
1200\_200\_90 & \textbf{110396}{[}+2153,+3695{]}          & 112550{[}+612.5,+2755{]}    & 234000 \\
1200\_250\_90 & \textbf{149924}{[}+2673,+3936{]}          & 151045{[}+1846,+3393{]}     & 305445 \\
1400\_200\_90 & \textbf{110085}{[}+2064,+3238{]}          & 111154{[}+1414,+3372{]}     & 234000 \\
1400\_250\_90 & \textbf{150570}{[}+1305,+3137{]}          & 151284{[}+1858,+3831{]}     & 305445 \\
1600\_200\_90 & \textbf{135717}{[}+8925,+21157{]}         & 136586{[}+5815,+31150{]}    & 367252 \\
1600\_250\_90 & \textbf{186249}{[}+13710,+28001{]}        & 188107{[}+16757.5,+28357{]} & 463177 \\
1800\_200\_90 & 220705{[}+5386,+10328{]}         & \textbf{217516}{[}+7570,+12295{]}    & 386152 \\
1800\_250\_90 & 279648{[}+2201,+8623{]}          & \textbf{278953}{[}+3702,+8000{]}     & 482092 \\
2000\_200\_90 & \textbf{244478}{[}+4211,+9719{]}          & 248243{[}+3687,+7869{]}     & 400822 \\
2000\_250\_90 & \textbf{307565}{[}+5381.5,+10326{]}       & 310396{[}+3932,+8922{]}     & 496891 \\
1000\_200\_95 & \textbf{116537}{[}+1148,+1951{]}          & 117111{[}+810,+2039{]}      & 190020 \\
1000\_250\_95 & \textbf{156144}{[}+1314,+2396{]}          & 157078{[}+805,+2548{]}      & 239160 \\
1200\_200\_95 & \textbf{118361}{[}+630,+2059{]}           & 118458{[}+938,+1653{]}      & 234000 \\
1200\_250\_95 & \textbf{159357}{[}+1469,+2741{]}          & 160060{[}+1219,+2785{]}     & 305445 \\
1400\_200\_95 & \textbf{118607}{[}+858,+2194{]}           & 118712{[}+1043,+2153{]}     & 234000 \\
1400\_250\_95 & \textbf{161040}{[}+1420,+2828{]}          & 161259{[}+1298,+2627{]}     & 305445 \\
1600\_200\_95 & \textbf{185265}{[}+6501,+12921{]}         & 188123{[}+4531,+14320{]}    & 367252 \\
1600\_250\_95 & \textbf{237184}{[}+6115,+14228{]}         & 238275{[}+5142,+14419{]}    & 463177 \\
1800\_200\_95 & \textbf{253035}{[}+2776,+4927{]}          & 255266{[}+1559,+3244{]}     & 386152 \\
1800\_250\_95 & \textbf{314056}{[}+3913,+8448{]}          & 315177{[}+3424,+9343{]}     & 482092 \\
2000\_200\_95 & \textbf{273052}{[}+5357,+7553{]}          & 274852{[}+3834,+6921{]}     & 400822 \\
2000\_250\_95 & \textbf{340105}{[}+7098,+9380{]}          & 341764{[}+5559.5,+8695{]}   & 496891 \\ \hline
\end{tabular}

\end{table}

\subsection{Experimental results on WSNO}

Experimental results on WSNO instances are shown in Table \ref{tab:wsno300} and Table \ref{tab:wsno3600}.  For each instance, the best results among the three algorithms are marked in bold font. The instances that an algorithm fails to find a feasible solution are marked "N/A". The results that are proven to be optimal by PBO-IHS are marked "*". Other than the two smallest instances 100\_40\_4 and 100\_40\_6, the performance of PBO-IHS lags far behind the other two algorithms on both 300s and 3600s time limit.

For 300s time limit, DeciLS-PBO obtains the same best results as LS-PBO on smaller instances but fails to find a feasible solution on larger instances. The time spent on unit propagation is related to the size of the instance. For DeciLS-PBO, the unit propagation process in the initial stage brings extra time consumption, which results in bad performance on larger instances.

For 3600s time limit, both DeciLS-PBO and LS-PBO obtain the same best results on all instances. However, the worst results obtained among 30 runs of DeciLS-PBO surpass that of LS-PBO on most instances, which indicates 2 observations: First, unit propagation-based decimation brings extra time cost but it also improves the initial assignment; Second, our DeciLS-PBO has a steadier search ability than LS-PBO, as its results among 30 runs do not vary a lot.

\begin{table}[]
\caption{Results on WSNO under 300s time limit}
\label{tab:wsno300}

\begin{tabular}{llll}
\hline
\multicolumn{1}{l|}{Instances}   & Deci-LSPBO              & LS-PBO                 & PBO-IHS \\ \hline
\multicolumn{1}{l|}{n m k}       & min{[}median,max{]}     & min{[}median,max{]}    &         \\ \hline
\multicolumn{1}{l|}{100\_40\_4}  & \textbf{210}{[}\textbf{+0},+3{]} & \textbf{210}{[}\textbf{+0},+12{]}        & 870     \\
\multicolumn{1}{l|}{150\_60\_4}  & \textbf{602}{[}\textbf{+0},\textbf{+0}{]}          & \textbf{602}{[}\textbf{+0},+2{]}         & 1180    \\
\multicolumn{1}{l|}{200\_80\_4}  & \textbf{715}{[}\textbf{+0},+6{]}          & \textbf{715}{[}\textbf{+0},+18{]}        & 1911    \\
\multicolumn{1}{l|}{250\_100\_4} & \textbf{1305}{[}\textbf{+0},\textbf{+0}{]}         & \textbf{1305}{[}\textbf{+0},+2{]}        & 2200    \\
\multicolumn{1}{l|}{300\_120\_4} & \textbf{1257}{[}+30,+1315{]}     & \textbf{1257}{[}+33,+1315{]}    & 2572    \\
\multicolumn{1}{l|}{350\_140\_4} & \textbf{1737}{[}+1426,+1426{]}   & \textbf{1737}{[}+332.5,+1426{]} & 3163    \\
\multicolumn{1}{l|}{400\_160\_4} & N/A{[}N/A,N/A{]}        & \textbf{2240}{[}+638,+1296{]}   & 3536    \\
\multicolumn{1}{l|}{450\_180\_4} & N/A{[}N/A,N/A{]}        & \textbf{1869}{[}+2172,+2172{]}  & N/A     \\
\multicolumn{1}{l|}{500\_200\_4} & N/A{[}N/A,N/A{]}        & \textbf{2577}{[}+1686,+2036{]}  & N/A     \\
\multicolumn{1}{l|}{100\_40\_6}  & \textbf{140}{[}\textbf{+0},+2{]}          & \textbf{140}{[}\textbf{+0},+6{]}         & \textbf{140*}    \\
\multicolumn{1}{l|}{150\_60\_6}  & \textbf{402}{[}\textbf{+0},+1{]}          & \textbf{402}{[}\textbf{+0},+1{]}         & 787     \\
\multicolumn{1}{l|}{200\_80\_6}  & \textbf{477}{[}\textbf{+0},+6{]}          & \textbf{477}{[}\textbf{+0},+13{]}        & 1274    \\
\multicolumn{1}{l|}{250\_100\_6} & \textbf{870}{[}\textbf{+0},+4{]}          & \textbf{870}{[}\textbf{+0},+9{]}         & 1467    \\
\multicolumn{1}{l|}{300\_120\_6} & \textbf{839}{[}\textbf{+0},+21{]}         & \textbf{839}{[}\textbf{+0},+278{]}       & 1715    \\
\multicolumn{1}{l|}{350\_140\_6} & \textbf{1158}{[}+6,+951{]}       & \textbf{1158}{[}+120,+951{]}    & 2109    \\
\multicolumn{1}{l|}{400\_160\_6} & \textbf{1493}{[}+417,+864{]}     & \textbf{1493}{[}\textbf{+0},+864{]}      & 2357    \\
\multicolumn{1}{l|}{450\_180\_6} & 1829{[}+865,N/A{]}      & \textbf{1246}{[}+824.5,+1448{]} & N/A     \\
\multicolumn{1}{l|}{500\_200\_6} & N/A{[}N/A,N/A{]}        & \textbf{1718}{[}+1238,+1357{]}  & N/A     \\ \hline
\end{tabular}
\end{table}

\begin{table}[]
\caption{Results on WSNO under 3600s time limit}
\label{tab:wsno3600}

\begin{tabular}{llll}
\hline
\multicolumn{1}{l|}{Instances}   & Deci-LSPBO          & LS-PBO              & PBO-IHS     \\ \hline
\multicolumn{1}{l|}{n m k}       & min{[}median,max{]} & min{[}median,max{]} &      \\ \hline                              
\multicolumn{1}{l|}{100\_40\_4}  & \textbf{210}{[}\textbf{+0},\textbf{+0}{]}      & \textbf{210}{[}\textbf{+0},\textbf{+0}{]}      & \textbf{210*} \\
\multicolumn{1}{l|}{150\_60\_4}  & \textbf{602}{[}\textbf{+0},\textbf{+0}{]}      & \textbf{602}{[}\textbf{+0},\textbf{+0}{]}      & 1180 \\
\multicolumn{1}{l|}{200\_80\_4}  & \textbf{715}{[}\textbf{+0},+3{]}      & \textbf{715}{[}\textbf{+0},+3{]}      & 1911 \\
\multicolumn{1}{l|}{250\_100\_4} & \textbf{1305}{[}\textbf{+0},\textbf{+0}{]}     & \textbf{1305}{[}\textbf{+0},\textbf{+0}{]}     & 2200 \\
\multicolumn{1}{l|}{300\_120\_4} & \textbf{1257}{[}\textbf{+0},+13{]}    & \textbf{1257}{[}\textbf{+0},+17{]}    & 2572 \\
\multicolumn{1}{l|}{350\_140\_4} & \textbf{1737}{[}\textbf{+0},+46{]}    & \textbf{1737}{[}\textbf{+0},+72{]}    & 3163 \\
\multicolumn{1}{l|}{400\_160\_4} & \textbf{2240}{[}\textbf{+0},\textbf{+0}{]}     & \textbf{2240}{[}\textbf{+0},+10{]}    & 3536 \\
\multicolumn{1}{l|}{450\_180\_4} & \textbf{1869}{[}\textbf{+0},+281{]}   & \textbf{1869}{[}\textbf{+0},+381{]}   & 4041 \\
\multicolumn{1}{l|}{500\_200\_4} & \textbf{2577}{[}+47,+2036{]} & \textbf{2577}{[}+6,+2036{]}  & 4613 \\
\multicolumn{1}{l|}{100\_40\_6}  & \textbf{140}{[}\textbf{+0},\textbf{+0}{]}      & \textbf{140}{[}\textbf{+0},\textbf{+0}{]}      & \textbf{140*} \\
\multicolumn{1}{l|}{150\_60\_6}  & \textbf{402}{[}\textbf{+0},\textbf{+0}{]}      & \textbf{402}{[}\textbf{+0},\textbf{+0}{]}      & 787  \\
\multicolumn{1}{l|}{200\_80\_6}  & \textbf{477}{[}\textbf{+0},+4{]}      & \textbf{477}{[}\textbf{+0},+7{]}      & 1274 \\
\multicolumn{1}{l|}{250\_100\_6} & \textbf{870}{[}\textbf{+0},+5{]}      & \textbf{870}{[}\textbf{+0},+12{]}     & 1467 \\
\multicolumn{1}{l|}{300\_120\_6} & \textbf{839}{[}\textbf{+0},+9{]}      & \textbf{839}{[}\textbf{+0},+9{]}      & 1715 \\
\multicolumn{1}{l|}{350\_140\_6} & \textbf{1158}{[}\textbf{+0},+4{]}     & \textbf{1158}{[}\textbf{+0},+53{]}    & 2109 \\
\multicolumn{1}{l|}{400\_160\_6} & \textbf{1493}{[}\textbf{+0},+9{]}     & \textbf{1493}{[}\textbf{+0},+20{]}    & 2357 \\
\multicolumn{1}{l|}{450\_180\_6} & \textbf{1246}{[}\textbf{+0},+12{]}    & \textbf{1246}{[}\textbf{+0},+79{]}    & 2694 \\
\multicolumn{1}{l|}{500\_200\_6} & \textbf{1718}{[}\textbf{+0},+15{]}    & \textbf{1718}{[}\textbf{+0},+501{]}   & 3075 \\ \hline
\end{tabular}  
\end{table}

\subsection{Experimental results on SAP}

\begin{table}[]
\caption{Results on SAP under 300s time limit}
\label{tab:sap300}

\begin{tabular}{llll}
\hline
\multicolumn{1}{l|}{Instances} & Deci-LSPBO               & LS-PBO              & PBO-IHS \\ \hline
\multicolumn{1}{l|}{n}         & Min{[}median,max{]}      & Min{[}median,max{]} &         \\ \hline
\multicolumn{1}{l|}{100}       & \textbf{579}{[}+7,+10{]} & 583{[}+4,+8{]}      & N/A     \\
\multicolumn{1}{l|}{110}       & \textbf{622}{[}+8,+10{]}          & 623{[}+6.5,+11{]}   & N/A     \\
\multicolumn{1}{l|}{120}       & \textbf{679}{[}+9,+13{]}          & 683{[}+7,+10{]}     & N/A     \\
\multicolumn{1}{l|}{130}       & \textbf{743}{[}+5,+12{]}          & 744{[}+5,+10{]}     & N/A     \\
\multicolumn{1}{l|}{140}       & \textbf{758}{[}+11,+15{]}         & 764{[}+5,+10{]}     & N/A     \\
\multicolumn{1}{l|}{150}       & \textbf{823}{[}+10,+14{]}         & 825{[}+7,+12{]}     & N/A     \\
\multicolumn{1}{l|}{160}       & \textbf{862}{[}+15,+21{]}         & 868{[}+11,+15{]}    & N/A     \\
\multicolumn{1}{l|}{170}       & \textbf{901}{[}+11,+16{]}         & 902{[}+10,+17{]}    & N/A     \\
\multicolumn{1}{l|}{180}       & \textbf{972}{[}+11,+16{]}         & 974{[}+9,+15{]}     & N/A     \\
\multicolumn{1}{l|}{190}       & \textbf{1002}{[}+10,+17{]}        & 1005{[}+7.5,+15{]}  & N/A     \\
\multicolumn{1}{l|}{200}       & \textbf{1065}{[}+15,+22{]}        & 1067{[}+13,+21{]}   & N/A     \\
\multicolumn{1}{l|}{210}       & 1112{[}+6,+18{]}         & \textbf{1111}{[}+9,+17{]}    & N/A     \\
\multicolumn{1}{l|}{220}       & \textbf{1148}{[}+23,+30{]}        & 1162{[}+9,+18{]}    & N/A     \\
\multicolumn{1}{l|}{230}       & \textbf{1197}{[}+15,+23{]}        & 1199{[}+13.5,+20{]} & N/A     \\
\multicolumn{1}{l|}{240}       & 1231{[}+10,+18{]}        & \textbf{1230}{[}+12,+19{]}   & N/A     \\
\multicolumn{1}{l|}{250}       & \textbf{1286}{[}+11,+19{]}        & \textbf{1286}{[}+13.5,+21{]} & N/A     \\
\multicolumn{1}{l|}{260}       & \textbf{1321}{[}+20,+30{]}        & 1329{[}+14,+22{]}   & N/A     \\
\multicolumn{1}{l|}{270}       & \textbf{1389}{[}+21.5,+33{]}      & 1400{[}+11,+19{]}   & N/A     \\
\multicolumn{1}{l|}{280}       & \textbf{1416}{[}+16,+29{]}        & 1417{[}+16,+25{]}   & N/A     \\
\multicolumn{1}{l|}{290}       & 1467{[}+11,+22{]}        & \textbf{1457}{[}+21.5,+31{]} & N/A     \\
\multicolumn{1}{l|}{300}       & 1542{[}+14,+22{]}        & \textbf{1533}{[}+22.5,+32{]} & N/A     \\ \hline
\end{tabular}
\end{table}

\begin{table}[]

\caption{Results on SAP under 3600s time limit}
\label{tab:sap3600}

\begin{tabular}{llll}
\hline
\multicolumn{1}{l|}{Instances} & Deci-LSPBO          & LS-PBO              & PBO-IHS \\ \hline
\multicolumn{1}{l|}{n}         & Min{[}median,max{]} & Min{[}median,max{]} &         \\ \hline
\multicolumn{1}{l|}{100}       & \textbf{579}{[}+5,+8{]}     & 582{[}+2,+4{]}      & N/A     \\
\multicolumn{1}{l|}{110}       & \textbf{620}{[}+6,+9{]}      & 623{[}+3,+6{]}      & N/A     \\
\multicolumn{1}{l|}{120}       & \textbf{678}{[}+6,+9{]}      & 680{[}+5,+8{]}      & N/A     \\
\multicolumn{1}{l|}{130}       & \textbf{737}{[}+6,+11{]}     & 739{[}+5,+10{]}     & N/A     \\
\multicolumn{1}{l|}{140}       & \textbf{752}{[}+11,+14{]}    & 758{[}+6,+9{]}      & N/A     \\
\multicolumn{1}{l|}{150}       & \textbf{818}{[}+9,+12{]}     & 821{[}+6.5,+11{]}   & N/A     \\
\multicolumn{1}{l|}{160}       & 862{[}+10.5,+15{]}  & \textbf{861}{[}+12,+17{]}    & N/A     \\
\multicolumn{1}{l|}{170}       & \textbf{896}{[}+11+14{]}     & 901{[}+5+10{]}      & N/A     \\
\multicolumn{1}{l|}{180}       & \textbf{964}{[}+13,+17{]}    & 970{[}+6.5,+13{]}   & N/A     \\
\multicolumn{1}{l|}{190}       & \textbf{995}{[}+10,+17{]}    & 998{[}+7,+17{]}     & N/A     \\
\multicolumn{1}{l|}{200}       & \textbf{1065}{[}+9.5,+13{]}  & 1066{[}+9,+13{]}    & N/A     \\
\multicolumn{1}{l|}{210}       & \textbf{1095}{[}+18,+25{]}   & 1104{[}+8,+14{]}    & N/A     \\
\multicolumn{1}{l|}{220}       & 1148{[}+14,+23{]}   & \textbf{1143}{[}+21.5,+27{]} & N/A     \\
\multicolumn{1}{l|}{230}       & \textbf{1192}{[}+12,+19{]}   & \textbf{1192}{[}+13,+18{]}   & N/A     \\
\multicolumn{1}{l|}{240}       & \textbf{1219}{[}+15,+21{]}   & 1227{[}+7,+14{]}    & N/A     \\
\multicolumn{1}{l|}{250}       & 1273{[}+15,+22{]}   & \textbf{1272}{[}+15.5+24{]}  & N/A     \\
\multicolumn{1}{l|}{260}       & \textbf{1304}{[}+29,+38{]}   & 1322{[}+12.5,+20{]} & N/A     \\
\multicolumn{1}{l|}{270}       & 1389{[}+13,+20{]}   & \textbf{1387}{[}+16,+23{]}   & N/A     \\
\multicolumn{1}{l|}{280}       & 1411{[}+12,+21{]}   & \textbf{1409}{[}+17,+20{]}   & N/A     \\
\multicolumn{1}{l|}{290}       & \textbf{1453}{[}+17,+24{]}   & 1457{[}+14,+21{]}   & N/A     \\
\multicolumn{1}{l|}{300}       & \textbf{1529}{[}+20,+29{]}   & 1533{[}+16,+24{]}   & N/A     \\ \hline
\end{tabular}
\end{table}

Experimental results on SAP instances are shown in Table \ref{tab:sap300} and Table \ref{tab:sap3600}. For each instance, the best results among the three algorithms are marked in bold font. PBO-IHS is unable to find any feasible solution for any instance on both 300s and 3600s time limit. Therefore, we focus on the comparison between LS-PBO and our DeciLS-PBO.

For 300s and 3600s time limit, the best results among 30 runs of DeciLS-PBO surpass that of LS-PBO on most of the instances, which indicates our DeciLS-PBO is more effective in exploring the search space and has a better search ability at both early and later stages.

\subsection{Experimental results on PB Competition 2016}

We tested 1600 instances from the most recent PB Competition \cite{pb16}. Given an instance, an algorithm wins on the instance means that its best result among all the runs is better than that of any other solver. For the PB16 benchmark, We count the win numbers of each algorithm and report the result in Table \ref{tab:pb16}. 

\begin{table}[]
\caption{Results on PB Competition 2016 (value: \#win)}
\label{tab:pb16}
\begin{tabular}{lllll}
\hline
\multicolumn{1}{l}{TimeLimit}  &\#inst  & DeciLS-PBO &  LS-PBO & PBO-IHS \\
\hline            
300s &1600  & 1130 & 1045 &\textbf{1171}               \\
\hline
3600s &1600  &\textbf{1215}    & 1142                  & 1185      \\
\hline
\end{tabular}
\end{table}

For 300s time limit, it can be observed in Table \ref{tab:pb16} that the performance of the complete solver PBO-IHS surpasses that of incomplete ones, and DeciLS-PBO achieves better performance than LS-PBO. For 3600s time limit, DeciLS-PBO outperforms both LS-PBO and the complete solver PBO-IHS.

\subsection{Empirical analysis on DeciLS-PBO}

To give a more detailed analysis of our DeciLS-PBO algorithm, we conduct further empirical studies to test the effectiveness of each component separately. Since DeciLS-PBO introduced two new components on the basis of the existing solver LS-PBO, we transform it into two alternative solvers:
\begin{itemize}
\item DeciLS-PBO$_{alt1}$: LS-PBO with Care-FC scheme.
\item DeciLS-PBO$_{alt2}$: LS-PBO with UP-based decimation as the initial assignment generator.
\end{itemize}

We compare both alternative solvers with LS-PBO and DeciLS-PBO. The performance of DeciLS-PBO$_{alt1}$ can reflect the effectiveness of our Care-FC scheme, whereas that of DeciLS-PBO$_{alt2}$ can reflect the effectiveness of the UP-based decimation algorithm. We conduct the experiment on the same instances as mentioned above, and set the same time limit which is 300s and 3600s. \#inst denotes the total number of instances the corresponding problem has. We compare the best results among 300 runs of each algorithm on each instance. For a given instance, an algorithm wins on this instance means that its best result among 30 runs is better than that of any other solver. We count the win numbers of each algorithm on each type of benchmark and arrange the results in Table \ref{tab:last1} and Table \ref{tab:last2}. The results in bold font indicate the best performance for the related problem.

From the experimental results in Table \ref{tab:last1} and \ref{tab:last2},  we have the following observations:
\begin{enumerate}[(1)]
    \item DeciLS-PBO outperforms its two alternative versions on all benchmarks except WSNO under 300s time limit, which indicates the remarkable performance of DeciLS-PBO, nevertheless, a drawback of UP-based initialization also exposes. For WSNO results under 300s time limit, the algorithms with UP-based initialization i.e., DeciLS-PBO and DeciLS-PBO$_{alt2}$ win 13 among 18 instances, whereas the algorithms without UP-based initialization i.e., LS-PBO and DeciLS-PBO$_{alt1}$ win 18 among 18 instances. Therefore, combining UP-based decimation for initialization brings a bad time consumption on extremely large instances, consequently, the algorithm is unable to find a feasible solution in a short time. For the 3600s time limit, algorithms with UP-based initialization on WSNO reach the same level as others, which indicates the UP-based decimation is still promising in longer term.
    \item Compared to LS-PBO on instances except WSNO under 300s time limit, the results of DeciLS-PBO$_{alt1}$ and DeciLS-PBO$_{alt2}$ demonstrate that both Care-FC scheme and UP-based decimation improve the LS-PBO separately. Moreover, it reflects the effectiveness of the two components at the initial stage. For the 3600s time limit, the performance of LS-PBO on the PB16 benchmark surpasses that of DeciLS-PBO$_{alt1}$ and DeciLS-PBO$_{alt2}$, but the gap does not vary a lot.

\end{enumerate}

\begin{table}[]
\caption{Results of DeciLS-PBO and its alternative versions on all instances (value: \#win)}
\label{tab:last1}

\begin{tabular}{llllll}
\hline
\multicolumn{1}{l}{Benchmark}  & \#inst & DeciLS-PBO & DeciLS-PBO$_{alt1}$ & DeciLS-PBO$_{alt2}$\\
\hline
\multicolumn{5}{l}{TimeLimit=300s}   \\                               
MWCB & 24     & \textbf{17}   & 5    & 2    \\
WSNO & 18     & 13   & \textbf{18}  & 13     \\
SAP  & 21     & \textbf{12}     & 7    & 8     \\
PB16 &1600 & \textbf{1168} &1101 &1100\\
 \hline 
 total &1663 &\textbf{1210} &1131 &1123\\
\hline
\multicolumn{5}{l}{TimeLimit=3600s}    \\ 

MWCB & 24     & \textbf{14}  & 6   & 4  \\
WSNO & 18 & \textbf{18} & \textbf{18}  & \textbf{18} \\
SAP  & 21     & \textbf{16}   & 5   & 2  \\
PB16 &1600 &\textbf{1249} &1203 &1202\\
\hline
total &1663 &\textbf{1297} &1232 &1226\\
\hline
\end{tabular}

\end{table}

\begin{table}[]
\caption{Results of LS-PBO and alternative versions of DeciLS-PBO on all instances (value: \#win)}
\label{tab:last2}

\begin{tabular}{llllll}
\hline
\multicolumn{1}{l}{Benchmark}  & \#inst & LS-PBO & DeciLS-PBO$_{alt1}$ & DeciLS-PBO$_{alt2}$\\
\hline
\multicolumn{5}{l}{TimeLimit=300s}   \\                              
MWCB & 24    & 4 & \textbf{13}   & 7        \\
WSNO & 18     & \textbf{18}  & \textbf{18}  & 13     \\
SAP  & 21     & 7  & 9 & \textbf{11}     \\
PB16 &1600 &1100 &1120 &\textbf{1142}\\
\hline
total & 1663 &1129 & 1160 &\textbf{1173}\\
\hline
\multicolumn{5}{l}{TimeLimit=3600s}    \\ 

MWCB & 24     & 3 & \textbf{11}  & 10  \\
WSNO & 18 & \textbf{18} & \textbf{18}  & \textbf{18} \\
SAP  & 21     & 7 & \textbf{13}   & 10 \\
PB16 &1600 &\textbf{1221} &1205 &1210\\
\hline
total & 1663 &\textbf{1249}  &1247 &1248 \\
\hline
\end{tabular}

\end{table}

\section{Conclusions}
\label{sec:con}
Pseudo Boolean Optimization is an important constraint optimization problem with many real-world applications. In this paper, we propose a new local search-based PBO solver dubbed DeciLS-PBO. Based on the existing solver LS-PBO, we introduced two components: an UP-based decimation algorithm in the initial stage to generate a better initial assignment, and a new scheme called Care-FC based on heuristic emphasizing clauses (constraints) called care to help choose a more suitable clause (constraint) when the search gets stuck. Experimental results on MWCB, WSNO, SAP, and PB Competition 2016 show that DeciLS-PBO outperforms its competitors on most of the instances above, which indicates the superiority of our new solver. In future work, we would like to further study the effectiveness of our Care-FC scheme on other solvers. The local search-based solvers share a similar framework, as a result, our Care-FC scheme can be directly applied to the existing local search-based PMS solvers.

\subsubsection*{Acknowledgements}
The authors declare that there is no conflict of interest regarding the publication of this paper. This work was partially supported by the National Natural Science Foundation of China (Grant Nos. 62076108 and 61872159), and the education department of Jilin Province (JJKH20211106KJ, JJKH20211103KJ).

\bibliographystyle{alpha}
\bibliography{main}

\end{document}